\def\BibTeX{{\rm B\kern-.05em{\sc i\kern-.025em b}\kern-.08em
    T\kern-.1667em\lower.7ex\hbox{E}\kern-.125emX}}
\begin{document}

\newcommand{\cmark}{\ding{51}}%
\newcommand{\xmark}{\ding{55}}%
\newtheorem{lemma}{Lemma}

\title{On the Volatility of Shapley-Based Contribution Metrics in Federated Learning
\thanks{ This paper was funded in part by the Luxembourg National Research Fund (FNR) under grant number 18047633.}
}

\author{\IEEEauthorblockN{1\textsuperscript{st} Arno Geimer}
\IEEEauthorblockA{\textit{SnT, University of Luxembourg} \\
arno.geimer@uni.lu}
\and
\IEEEauthorblockN{2\textsuperscript{nd} Beltran Fiz}
\IEEEauthorblockA{\textit{SnT, University of Luxembourg} \\
beltran.fiz@uni.lu}
\and
\IEEEauthorblockN{3\textsuperscript{rd} Radu State}
\IEEEauthorblockA{\textit{SnT, University of Luxembourg} \\
radu.state@uni.lu}
}

\maketitle

\begin{abstract}

Federated learning (FL) is a collaborative and privacy-preserving Machine Learning paradigm, allowing the development of robust models without the need to centralize sensitive data. A critical challenge in FL lies in fairly and accurately allocating contributions from diverse participants. Inaccurate allocation can undermine trust, lead to unfair compensation, and thus participants may lack the incentive to join or actively contribute to the federation. Various remuneration strategies have been proposed to date, including auction-based approaches and Shapley-value-based methods, the latter offering a means to quantify the contribution of each participant. However, little to no work has studied the stability of these contribution evaluation methods. In this paper, we evaluate participant contributions in federated learning using gradient-based model reconstruction techniques with Shapley values and compare the round-based contributions to a classic data contribution measurement scheme. We provide an extensive analysis of the discrepancies of Shapley values across a set of aggregation strategies and examine them on an overall and a per-client level. We show that, between different aggregation techniques, Shapley values lead to unstable reward allocations among participants. Our analysis spans various data heterogeneity distributions, including independent and identically distributed (IID) and non-IID scenarios.

\end{abstract}

\begin{IEEEkeywords}
Federated Learning, Incentive Mechanism, Shapley Value
\end{IEEEkeywords}

\section{Introduction}
\label{sec:intro}

Federated Learning (FL), a collaborative method for training Machine Learning models without the entities involved having to share their datasets, has gained a lot of traction in recent years as a prevalent form of collaborative learning \cite{49232}. The growing focus on privacy preservation, driven by the enactment of regulations such as the General Data Protection Regulation (GDPR), emphasizes the importance of FL in addressing data privacy concerns.

It has seen adoption in a wide range of domains, from financial corporations and medical institutions to IoT devices, with major industry players such as Nvidia offering in-house platforms and solutions \cite{roth2022nvidia}.
An important aspect of a federation is the strategy, defining the approach in which the central server computes model updates. Since the inception of Federated Learning, numerous strategies have been introduced to manage specific problems, such as poor performance on heterogeneous datasets \cite{ye2023heterogeneous}, attacks by malicious clients \cite{li2020learning}, and the convergence speed of the global model \cite{reddi2021adaptive}. 

In addition to the performance of the model, the evaluation and fair distribution of rewards to participants in a federation is of utmost importance, particularly in commercial settings where financial incentives might be needed for the participation of the client \cite{zeng2021comprehensivesurveyincentivemechanism}. Each participant would expect to be fairly compensated for their participation, regardless of the underlying aggregation strategy used.

In this work, we use cumulative round-based Shapley values as percentages to represent the contribution of each client. Shapley values, a concept from cooperative game theory, are employed to quantify the contribution of each participant in a federation by evaluating their marginal impact on the model's performance in each communication round. This approach is a popular contribution evaluation method \cite{liu2022gtg} and reflects a practical industry use case where, in each round, the central server assesses the contribution of each client based on their local model updates. Over time, the server aggregates these round-based contributions to determine the overall contribution of each participant, allowing a fair reward distribution. We analyze, across an array of popular aggregation strategies, the overall, as well as per-client, reliability of these generated contribution values.

Focusing on image recognition tasks, our results treat a vast and diverse set of possible Federated Learning scenarios. In addition to using multiple vision datasets popular in FL literature, we simulate data distributions among participants using splits generated using a Dirichlet distribution to control the degree of heterogeneity. We show that, across most tested strategies, round-based Shapley values perform similarly in terms of closeness to the ground truth.

Finally, we study the stability of contributions on a per-client basis and demonstrate that round-based Shapley values cause instability on client-level contributions, rendering them potentially unfit for real-world usage.

The contributions of this paper can be summarized as follows:
\begin{enumerate}[(i)]

\item We analyze the properties and effects of the weight assignment parameter in round-based contribution calculation.
\item We perform extensive experiments studying Federated Learning contribution allocation across a broad range of popular aggregation strategies, providing a novel comprehensive comparative analysis.

\item We discover an inherent instability of Shapley-based contribution evaluation methods across popular aggregation strategies and examine this instability in a cross-silo federation.
\end{enumerate}

\section{Background}
\label{sec:background}
\subsection{Federated Learning}\label{FL}

A basic Federated Learning architecture consists of a group of participants, or clients, who each own a private local dataset, along with a central server or aggregator. Before training starts, they agree on the model architecture and hyperparameters such as learning rates, the aggregation strategy, and local epoch numbers. 

Subsequently, the central server starts the federation by initializing and distributing the first global model. At the onset of each round, the aggregator selects a subset of clients and transmits them the current global model. After receiving the current global model, users train their respective data for the $e$ epochs and upload their new model to the server. The number of local epochs, $e$, has a serious impact on the federation \cite{mcmahan2017communication}: A low value requires more rounds of communication until convergence is achieved, which can be problematic when participants are low-resource IoT devices. However, choosing a value that is too high negatively impacts the performance of the final model, leading to the need to find a balance between performance and communication overhead.

After receiving all chosen client models, the server uses the agreed-upon strategy as an aggregation function to compute the new global model. Repeating this process until satisfactory performance has been achieved allows the collaborating entities to obtain a model incorporating knowledge of all clients, which has been shown to outperform locally trained models \cite{mcmahan2017communication}.

Due to the variety of applications in Federated Learning, two primary setups have emerged. In a \textbf{cross-device} federation, a large number of distributed clients, usually with small volumes of data, collaborate to build a robust model. On the other hand, \textbf{cross-silo} setups involve few participants with large amounts of data working together to create a global model that incorporates the specific knowledge of each entity. In an environment comprised of profit-driven entities, correctly assessing contributions is of utmost importance, as participants may invest money in infrastructure and compliance in hopes of receiving their equitable part of the payout. We focus on cross-silo federations, as they are a more appropriate use case for contribution calculations.

\subsection{Aggregation strategies}\label{aggs}
In Federated Learning, different aggregation strategies may be employed based on the specific circumstances and objectives of the participants.

\begin{itemize}
\item \textsc{FedAvg} \cite{mcmahan2017communication} is the baseline Federated Learning aggregation technique. \\
\item Federated Averaging with Momentum \cite{hsu2019measuring}, or \textsc{FedAvgM}, improves the poor performance of \textsc{FedAvg} on heterogeneous data by adding momentum when updating model weights.\\
\item \textsc{FedAdagrad, FedAdam, FedYogi}, all three proposed in \cite{reddi2021adaptive}, employ advanced gradient-based optimization methods to improve the convergence speed of the federation. With clients' weight updates considered a pseudo-gradient, they use \textsc{Adagrad, Adam} and \textsc{Yogi}, respectively. \\
\item To reduce potentially adversarial updates, \textsc{FedMedian, FedTrimAvg} \cite{yin2021byzantinerobust} update the global model as the median and trimmed mean of client updates, as opposed to a weighted average in \textsc{FedAvg}. In adversarial settings, this has been shown to guarantee substantial improvements over the baseline. \\
\item Finally, \textsc{Krum} \cite{blanchard2017machine}, similarly to the two previous methods, is a modified aggregation to counter adversarial clients. It excludes client updates that are too far away from the other clients.
\end{itemize}

All proposed strategies share the global objective function \[f(w) = \frac{1}{K}\sum_{k=1}^{K}{{F_k(w)}}.\]
Although our investigation includes aggregation strategies aimed at improving defenses against model attacks, we do not include adversarial participants. 

\subsection{Shapley values}\label{Shapley values}
In game theory, a cooperative game of n persons is a game in which subsets of players, called coalitions, can cooperate to obtain a utility $v$ \cite{parrachino2006cooperative}, which they may distribute among themselves. To determine the exact contribution of each player in the coalition to $v$, Shapley values \cite{shapley1951notes} can be used. Specifically, let $S$ be a coalition of players, then the contribution $\phi$ of player $i \in S$ to $v$ can be determined as 
\begin{equation}\label{eq:shapley}
    \phi_{i}(v) = \sum_{S \subseteq N \setminus {i}}{\frac{|S|!(n - |S| -1)!}{n!} \Big[v\big(S \cup {i}\big) - v\big(S\big)\Big]}
\end{equation}
Shapley values have a range of desirable qualities, among which the property that players who bring no value to a coalition have Shapley value zero. In addition, players who bring the same value also receive the same contribution values. In fact, it has been shown in \cite{jia2019towards} that the Shapley value is the only contribution measure that satisfies these properties. This is particularly important in an FL setting, where adversarial parties might try to unfairly increase their perceived contribution by joining the federation as multiple participants or artificially inflating their data. 

\section{Related Work}\label{sec:relatedwork}
Several approaches have been devised to evaluate participants' contributions during Federated Learning training.
We give a brief overview of the most common approaches, which are either based on numeric computations or on self-disclosed information and estimations by the central server.

\subsubsection*{Self-reported}
The most direct approach seen in previous work is to have participants perform their own reporting, providing information regarding their local model training process \cite{sarikaya2019motivating,zhang2020hierarchically}. The information provided may encompass aspects such as the quality and quantity of data provided by the participant; costs associated with data collection, processing, and communication in the federation. A limitation of these approaches is the assumption of honesty and the ability of clients to evaluate their own conditions. Reputation-based client selection techniques have been proposed to mitigate this issue \cite{10806597}

\subsubsection*{Influence}
Another approach to quantifying the contribution of participants are influence-based methods. These apply a systematic method for quantifying the impact of individual data points on the global model \cite{koh2017understanding}. Influence can be computed by assessing the disparity in models when trained with and without specific data points. This approach was used by \cite{richardson2019rewarding} to provide incentives for clients to provide high-quality data during the training process. Although the influence metric offers detailed insights, its computation can be challenging due to the expensive nature of retraining models.

\subsubsection*{Auctions}
In diverse areas of Federated Learning, auctions serve as a fundamental economic mechanism to allocate resources such as training data and computational power, setting prices through bidding \cite{tu2021incentive}. Although contribution estimation is an important aspect, other key concepts in auction theory include valuation, utility, and social welfare, with the ultimate goal being to enhance the efficiency and effectiveness of resource allocation in FL environments.

\subsubsection*{Shapley values}

Used in game theory to evaluate the contribution of players to a common task, Shapley values are the most common approach to numerically assess individual clients' contribution to the shared model in FL.

\subsection*{Shapley values in Federated Learning}

Although it has been proposed to calculate Shapley values in FL by evaluating the performance of fully trained federated models on subsets of participants \cite{wang2019measure}, a more common approach, called One-Round Reconstruction, is to use gradient-based model reconstruction \cite{song2019profit} which assesses contributions at each federation round, possibly using Monte Carlo methods \cite{wang2020principled} to speed up the calculation.  We will use this method to calculate per-round contributions of all clients, with final contribution values derived from the sum of each client's Shapley values. A detailed calculation can be found in Algorithm \ref{alg:shapley_algorithm}. One-Round Reconstruction is of particular interest in continuous learning tasks, where payouts may occur periodically instead of just once.

\begin{algorithm}
\DontPrintSemicolon
\caption{One-round reconstruction of Shapley values  with \textsc{FedAvg}; $n_k$ is the Shapley value and dataset size of client $k$. $C$ is the set of all clients, with size $K$.}
\label{alg:shapley_algorithm}
\textbf{Server} executes: \;
\For{\upshape each round $t = 1, 2, ...$}{
    \For{\upshape each client $k \in C$}{
        $\phi_k^t = \sum_{S \subseteq C \setminus {k}}{\frac{|S|!(K - |S| -1)!}{K!}F_k^S}$\;
        \textbf{where} $F_k^S = \Big[F_k(\omega_{S \cup k}) - F_k(\omega_{S})\Big]$ \;
        \textbf{with} $\omega_{S} = \sum_{i \in S}\frac{n_i}{n_S}\omega_i^t$ \;
        \textbf{and} $n_S = \sum_{i \in S} n_i$ \;
    }
}
\end{algorithm}

Note that the calculation of $\omega_S$ depends on the chosen aggregation technique and may not be a simple average as shown above. 
Although the literature extensively discusses various evaluation methods and their respective advantages and disadvantages from complexity and communication standpoints, to the best of our knowledge, no prior research has specifically addressed the stability of contributions.

\section{Methodology}
Having established the motivation behind our work, we present the proposed methodology applied to the experiments.

\subsection{Experimental Setup}\label{sec:expsetup}
An open source repository containing an implementation of Shapley value calculations in FL, used in our experiments, is available at {\small \url{https://github.com/arnogeimer/flwr-contributions}}.

\subsubsection{Training data}
The experiments include image classification on \textsc{CIFAR}-10 and \textsc{CIFAR}-100 \cite{krizhevsky2009learning}, \textsc{MNIST} \cite{deng2012mnist} and Fashion-\textsc{MNIST} \cite{xiao2017fashion}, four of the most common tasks in the Federated Learning literature.

\begin{table}[ht]
    \begin{center}
    \begin{tabular}{cccc}
    \hline
        Dataset & Clients & Train examples & Train steps \\
        \hline
        \textsc{Cifar-10} & 3 & 50,000 & 600 \\
        \textsc{Cifar-100} & 3 & 50,000 & 1000 \\
        \textsc{Mnist} & 5 & 60,000 & 120 \\
        \textsc{FMnist} & 5 & 60,000 & 160 \\
        \hline
    \end{tabular}
    \end{center}
    \caption{Dataset statistics.}
    \label{table:dsstats}
\end{table}

Following \cite{li2022federated}, we use a Dirichlet distribution-based size split with concentration parameter $\alpha$ to control the degree of non-IID data distribution. Specifically, we use $\alpha = 1$ for heavily non-IID data, $\alpha = 10$ for slightly non-IID data, and $\alpha = 100$ for an almost uniform distribution across all datasets. Using these values allows us to effectively study the effect of different levels of data heterogeneity.

\subsubsection{Model and training pipeline}
Since the focus of our work is not on model architecture, we deploy a straightforward convolutional neural network on all tasks, without hyperparameter fine-tuning.
The model consists of two 5x5 convolution layers, 2 dense layers with ReLu activation and 16x5x5x120 and 120x84 units, respectively, and a dense layer with 84x10 units. Image preprocessing for CIFAR-10 and CIFAR-100 consists of random cropping, horizontal flipping, and random change of brightness, contrast, saturation, and hue. 
These models are standard in the Federated Learning literature and their performance is adequate.

\subsubsection{Strategies}
We employ a collection of 8 different aggregation strategies, which have all been widely adopted across their respective application domains. Although aggregation strategies that do not have the same objective function as \textsc{FedAvg} have been proposed \cite{li2019fair}, we do not include any of them in our analysis. Performance metrics of strategies, whether related to model accuracy or defense against attacks, are not the primary focus of this paper. We therefore employ out-of-the-box hyperparameters in all strategies.

\subsubsection{Technical specifications}

The models are built with Pytorch 1.13.1 \cite{NEURIPS2019_9015}, using Flower 1.6 \cite{beutel2020flower} as the Federated Learning framework. All training is executed on Nvidia Tesla V100 16 GB VRAM GPUs.

As we used 4 different datasets and three Dirichlet splitting values, the experiments included 12 different use cases. With 70 seeds and 3 epoch values $e \in (2, 5, 10)$ per use case and a total of 8 aggregation strategies, our experiments encompass more than 20.000 full Federated Learning runs.

\subsection{Contribution calculation}

\subsubsection{Round-based aggregation}
We will call the contribution of the client $k$ the normalized weighted sum of their per-round Shapley values $\phi_k^t$. The total contribution of a participant is thus represented on a percentage scale, allowing better comparisons between runs. Although the numerical values of the Shapley values may vary, the proportions remain comparable. Shapley values are aggregated using an inverse linear factor for rounds, up to a maximum contribution halting round $R$. For example, at $R = 10$, the Shapley values are added using weight $\frac{10}{10}$ in round $1$, $\frac{9}{10}$ in round 2, etc:
\begin{equation}
    \phi_k(R) = \sum_{t = 0}^R{\frac{R-t}{R}\phi_k^t}
\end{equation}
The final contribution of client $k$, as a percentage value, is $\phi_k \times (\sum_{i\in C}{\phi_i})^{-1}$.

\subsubsection{Ground truth}
Determining which aggregation strategy produces the values closest to a ground truth is of the utmost importance. To this end, we establish the ground-truth contribution relative to the sizes of the participants' datasets. Since we do not operate in an adversarial environment, where actors may lie about their dataset statistics, and the distribution of classes across different splittings is nonheterogeneous, we deem the size of splits to be a reliable contribution assessment for clients. This method has previously been used in the literature\cite{guo2024fair}\cite{9906094}.
The correctness of a contribution is calculated using the squared Euclidean distance to the ground truth, in line with previous studies. To better understand our results, we establish a lower bound using an equal payout contribution evaluation approach.

\begin{lemma}\label{thm:lemma}
    Let a data set $D$ be split by size into $n$ different subsets $D_1, ..., D_n$ following a Dirichlet distribution $Dir((\alpha, ..., \alpha))$. Then, an equal payout differs from a size-based payout, on average, by $d = \frac{n-1}{n^2 \alpha + n}$ under the squared Euclidean distance.
\end{lemma}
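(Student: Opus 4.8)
The plan is to read the two payout vectors as points in the probability simplex and compute the expected squared Euclidean distance between them. The size-based payout is exactly the random proportion vector $p = (p_1, \dots, p_n) \sim \mathrm{Dir}(\alpha, \dots, \alpha)$, where $p_i$ is the fraction of $D$ assigned to $D_i$, while the equal payout is the uniform vector $u = (1/n, \dots, 1/n)$. The target quantity $d$ is then $\mathbb{E}\big[\|p - u\|_2^2\big] = \sum_{i=1}^n \mathbb{E}\big[(p_i - 1/n)^2\big]$, obtained by expanding the squared norm coordinatewise and applying linearity of expectation. So the first step is simply to fix this interpretation of ``on average'' as an expectation over the Dirichlet draw.

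The key simplification comes from the symmetry of the concentration parameters. Since all parameters equal $\alpha$, the total concentration is $\alpha_0 = n\alpha$ and each marginal mean is $\mathbb{E}[p_i] = \alpha/\alpha_0 = 1/n$, which is precisely the $i$-th coordinate of $u$. Hence each summand is $\mathbb{E}\big[(p_i - \mathbb{E}[p_i])^2\big] = \mathrm{Var}(p_i)$, so that $d = \sum_{i=1}^n \mathrm{Var}(p_i) = n\,\mathrm{Var}(p_1)$ by symmetry. I would emphasize here that no covariance terms appear: because the objective is a sum of squared coordinate deviations rather than a product of distinct coordinates, only the marginal variances are needed, and the off-diagonal entries of the Dirichlet covariance matrix are irrelevant.

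It then remains to evaluate a single marginal variance. I would invoke the standard fact that each marginal of a symmetric Dirichlet is $p_i \sim \mathrm{Beta}(\alpha, (n-1)\alpha)$, whose variance is $\mathrm{Var}(p_i) = \frac{\alpha \cdot (n-1)\alpha}{(n\alpha)^2 (n\alpha + 1)} = \frac{n-1}{n^2(n\alpha + 1)}$. Multiplying by $n$ and simplifying gives $d = \frac{n-1}{n(n\alpha+1)} = \frac{n-1}{n^2\alpha + n}$, as claimed. There is no substantial obstacle in this argument; the only point that genuinely requires care is verifying that $u$ coincides with the mean of $p$, since it is this coincidence that collapses the expectation into a pure sum of marginal variances and spares us from handling the simplex covariance structure.
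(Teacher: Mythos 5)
Your proposal is correct and follows essentially the same route as the paper's proof: both identify the equal payout $(1/n,\dots,1/n)$ with the mean of the symmetric Dirichlet vector, reduce the expected squared Euclidean distance to the sum of marginal variances by linearity, and evaluate $n \cdot \mathrm{Var}(X_i) = \frac{n-1}{n^2\alpha+n}$. Your write-up is somewhat more explicit than the paper's (naming the $\mathrm{Beta}(\alpha,(n-1)\alpha)$ marginal and noting that covariance terms never enter), but the argument is the same.
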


\begin{proof}
Representing the size-based payout by a Dirichlet-random variable $X \sim Dir((\alpha, ..., \alpha))$, the expected squared Euclidean difference to an equal payout is 
\begin{align*}
    d & =  \mathbb{E} \left[ \left\| \left( X - \left(\frac{1}{n}, ..., \frac{1}{n}\right) \right) \right\|_2^2 \right] \\
    & = \mathbb{E} \left[ \sum_{i=1}^{n}{\left( X_i - \mu(X_i) \right)^2} \right] \\
    & = \sum_{i=1}^{n}{\mathbb{E} \left[ (X_i - \mu(X_i))^2 \right] } \\
    & = \sum_{i=1}^{n}{\text{var}(X_i)} \\
    & = n \times \frac{n-1}{n^3\alpha+n^2} \\
\end{align*}
\end{proof}

The lemma gives an upper bound on the performance of any contribution method, provided that size is the ground truth. Given any Dirichlet-based data split, if a contribution method's result does not, on average, lie within $\frac{n-1}{n^2 \alpha + n}$ of the ground truth, it is more sensible to use an equal payout instead of that contribution method.

\subsection{Optimizing the Contribution Halting Round $R$}
Figure \ref{fig:bestR} contains the density plots of the optimal values for $R$ over all experiments with the respective aggregation strategy.
It shows that determining an optimal value for $R$ is not straightforward: No strategy displays a clear, easy-to-analyze distribution. In all cases, an optimal value appears across all rounds, with a skew towards the beginning, or the end, of the Federated Learning process. We will use the mean optimal $R$ of the respective strategy in our experiments. In applications, a relatively early stopping point for contribution evaluation proves beneficial, as it requires less computational expenses on the server side. However, this will be decided on a case-by-case basis.

\begin{figure}[ht]
   \centering
    \includegraphics[scale = .45]{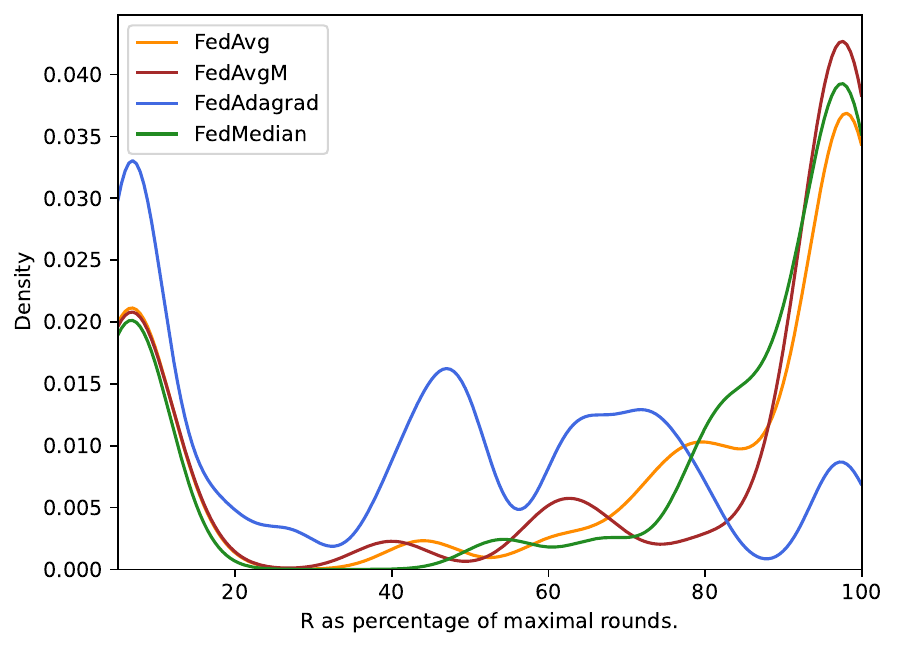}
   \caption{Density plots of the best value for $R$, as percentage of total rounds, for select strategies. An optimal value minimizes the distance to the ground truth in a run.}
   \label{fig:bestR}
\end{figure}

\section{Contribution Performance}

\begin{table*}[ht]
    \begin{center}
    \begin{tabular}{ll|c|c|c| |c|c|c| |c|c|c| |c|c|c}
        \cline{3-14}
        & & \multicolumn{3}{c||}{\textsc{Cifar-10}} & \multicolumn{3}{c||}{\textsc{Cifar-100}} & \multicolumn{3}{c||}{\textsc{Mnist}} & \multicolumn{3}{c}{\textsc{FMnist}} \\
        \hline
        \multicolumn{1}{l||}{$e$} &{\diagbox[height=2\line]{Strategy}{$\alpha$}} & 1 & 10 & 100  & 1 & 10 & 100 & 1 & 10 & 100 & 1 & 10 & 100 \\

        \hline
        \hline
        \multicolumn{1}{l||}{}& \multicolumn{1}{l|}{FedAvg}   & \underline{3.71} & \underline{0.86} & \textbf{0.77} & \underline{\textbf{0.62}} & \underline{0.76} & 0.96 & \underline{3.42} & \underline{1.3} & 1.18 & \underline{3.63} & \underline{1.12} & 0.65\\
        \multicolumn{1}{l||}{}& \multicolumn{1}{l|}{FedAvgM}   & \underline{3.66} & \underline{0.9} & 1.05 & \underline{0.7} & \underline{\textbf{0.73}} & 1.08 & \underline{3.53} & \underline{1.11} & 1.2 & \underline{3.59} & \underline{1.03} & 0.68\\
        \multicolumn{1}{l||}{}& \multicolumn{1}{l|}{FedAdagrad}   & \underline{8.21} & \underline{1.53} & 0.93 & \underline{3.02} & \underline{1.18} & \textbf{0.74} & \underline{4.06} & \underline{1.51} & 1.27 & \underline{4.22} & \underline{1.07} & 0.85\\
        \multicolumn{1}{l||}{2}& \multicolumn{1}{l|}{FedAdam}   & \underline{3.5} & 2.39 & 1.13 & \underline{1.04} & \underline{1.19} & 1.15 & \underline{\textbf{2.43}} & 1.85 & 1.63 & \underline{3.98} & 1.62 & 1.21\\
        \multicolumn{1}{l||}{}& \multicolumn{1}{l|}{FedYogi}   & \underline{\textbf{2.4}} & \underline{1.34} & 0.82 & \underline{1.09} & \underline{1.55} & 1.37 & \underline{4.58} & \underline{1.37} & \textbf{0.5} & \underline{5.77} & \underline{1.51} & 0.64\\
        \multicolumn{1}{l||}{}& \multicolumn{1}{l|}{FedMedian}   & \underline{3.3} & \underline{\textbf{0.79}} & 0.97 & \underline{0.69} & \underline{0.92} & 1.05 & \underline{2.74} & \underline{1.33} & 1.05 & \underline{\textbf{3.49}} & \underline{\textbf{0.95}} & \textbf{0.63}\\
        \multicolumn{1}{l||}{}& \multicolumn{1}{l|}{FedTrimAvg}   & \underline{3.48} & \underline{0.83} & 0.96 & \underline{0.74} & \underline{0.81} & 0.75 & \underline{3.08} & \underline{\textbf{1.1}} & 1.07 & \underline{3.56} & \underline{1.01} & 0.67\\
        \multicolumn{1}{l||}{}& \multicolumn{1}{l|}{Krum}   & 22.71 & 7.14 & 2.3 & 41.23 & 7.76 & 2.79 & \underline{5.93} & 1.85 & 1.23 & \underline{6.38} & 2.35 & 1.05\\
        \hline
        \multicolumn{14}{c}{} \\
        \hline
        \multicolumn{1}{l||}{}& \multicolumn{1}{l|}{FedAvg}   & \underline{3.71} & \underline{0.9} & 1.15 & \underline{0.82} & \underline{\textbf{0.64}} & 0.67 & \underline{3.51} & 1.65 & 1.78 & \underline{3.37} & \underline{1.0} & 0.87\\
        \multicolumn{1}{l||}{}& \multicolumn{1}{l|}{FedAvgM}   & \underline{3.66} & \underline{0.92} & 1.2 & \underline{0.75} & \underline{0.71} & 0.88 & \underline{3.39} & 1.62 & 1.57 & \underline{\textbf{3.23}} & \underline{0.96} & 1.09\\
        \multicolumn{1}{l||}{}& \multicolumn{1}{l|}{FedAdagrad}   & \underline{7.78} & \underline{1.56} & \textbf{0.9} & \underline{3.19} & \underline{0.93} & 1.08 & \underline{4.46} & 1.81 & 1.36 & \underline{3.79} & \underline{\textbf{0.88}} & 1.13\\
        \multicolumn{1}{l||}{5}& \multicolumn{1}{l|}{FedAdam}   & \underline{4.16} & 3.55 & 2.56 & \underline{2.04} & \underline{2.11} & 2.49 & \underline{4.22} & 3.11 & 3.11 & \underline{3.92} & 2.38 & 1.78\\
        \multicolumn{1}{l||}{}& \multicolumn{1}{l|}{FedYogi}   & \underline{\textbf{1.98}} & \underline{1.01} & 0.77 & \underline{\textbf{0.67}} & \underline{1.01} & \textbf{0.5} & \underline{3.48} & \underline{\textbf{0.81}} & \textbf{0.53} & \underline{3.92} & \underline{1.24} & \textbf{0.63}\\
        \multicolumn{1}{l||}{}& \multicolumn{1}{l|}{FedMedian}   & \underline{3.31} & \underline{\textbf{0.73}} & 0.92 & \underline{0.88} & \underline{0.66} & 0.94 & \underline{\textbf{2.96}} & \underline{1.31} & 1.75 & \underline{3.27} & \underline{0.99} & 1.05\\
        \multicolumn{1}{l||}{}& \multicolumn{1}{l|}{FedTrimAvg}   & \underline{3.91} & \underline{0.89} & 0.92 & \underline{0.9} & \underline{0.73} & 0.9 & \underline{3.2} & 1.69 & 1.78 & \underline{3.3} & \underline{0.92} & 0.97\\
        \multicolumn{1}{l||}{}& \multicolumn{1}{l|}{Krum}   & 20.14 & 7.35 & 2.54 & 26.03 & 4.9 & 1.84 & \underline{6.09} & 2.37 & 2.25 & \underline{6.19} & 2.48 & 1.61\\
        \hline
        \multicolumn{14}{c}{} \\
        \hline
        \multicolumn{1}{l||}{}& \multicolumn{1}{l|}{FedAvg}   & \underline{4.32} & \underline{\textbf{0.85}} & \textbf{1.17} & \underline{1.12} & \underline{\textbf{0.56}} & 1.1 & \underline{3.63} & 2.24 & 2.2 & \underline{3.14} & \underline{\textbf{1.09}} & 1.35\\
        \multicolumn{1}{l||}{}& \multicolumn{1}{l|}{FedAvgM}   & \underline{4.04} & \underline{0.86} & 1.36 & \underline{1.23} & \underline{0.63} & \textbf{0.8} & \underline{3.43} & 1.89 & 2.39 & \underline{3.31} & \underline{1.12} & 1.58\\
        \multicolumn{1}{l||}{}& \multicolumn{1}{l|}{FedAdagrad}   & \underline{7.82} & \underline{1.42} & 1.53 & \underline{3.49} & \underline{1.45} & 0.95 & \underline{3.43} & \underline{\textbf{1.43}} & \textbf{1.12} & \underline{3.28} & \underline{1.42} & 1.48\\
        \multicolumn{1}{l||}{10} & \multicolumn{1}{l|}{FedAdam}   & \underline{4.55} & 3.15 & 1.64 & \underline{2.68} & \underline{1.12} & 1.4 & \underline{4.71} & 2.99 & 2.95 & \underline{4.16} & 3.31 & 2.85\\
        \multicolumn{1}{l||}{}& \multicolumn{1}{l|}{FedYogi}   & \underline{\textbf{2.28}} & \underline{1.54} & 1.27 & \underline{1.34} & \underline{0.8} & 1.34 & \underline{4.63} & 3.14 & 1.87 & \underline{3.13} & 1.83 & 1.51\\
        \multicolumn{1}{l||}{}& \multicolumn{1}{l|}{FedMedian}   & \underline{3.16} & \underline{1.17} & 1.74 & \underline{1.0} & \underline{0.97} & 1.33 & \underline{\textbf{2.94}} & 2.12 & 2.24 & \underline{3.28} & 1.67 & 1.45\\
        \multicolumn{1}{l||}{}& \multicolumn{1}{l|}{FedTrimAvg}   & \underline{4.1} & \underline{0.93} & 1.27 & \underline{\textbf{0.88}} & \underline{0.64} & 0.94 & \underline{3.11} & 2.02 & 2.31 & \underline{\textbf{3.04}} & \underline{1.11} & \textbf{1.19}\\
        \multicolumn{1}{l||}{}& \multicolumn{1}{l|}{Krum}   & \underline{15.85} & 7.07 & 4.04 & 17.63 & 4.57 & 2.11 & \underline{6.02} & 3.8 & 3.67 & \underline{5.94} & 3.36 & 2.37\\
        \hline
        \multicolumn{14}{c}{} \\
        \cline{2-14}
        & \multicolumn{1}{l|}{Expected equal error} & 16.67 & 2.15 & 0.22 & 16.67 & 2.15 & 0.22 & 13.33 & 1.57 & 0.16 & 13.33 & 1.57 & 0.16 \\
        \cline{2-14}
        \multicolumn{14}{c}{} \\
        
       \end{tabular}
        \caption{Average square Euclidean distance from per-strategy contributions to the size-based ground truth, lower values are better. The last row contains the expected difference to an equal payout, as determined in lemma \ref{thm:lemma}. \underline{Underlined} entries beat the equal allocation, \textbf{bold} entries are the best in the column. All values were multiplied by 100 for readability.}
       \label{table:means}
    \end{center}
\end{table*}

Table \ref{table:means} shows the squared Euclidean distance from the ground truth in all experiments. We note that most strategies outperform an equal payout when dataset sizes are heavily heterogeneous. The same can be observed in the slightly non-iid case. 
In the case of $\alpha = 100$, equal contributions beat all strategies as expected, as Dirichlet random values with such a high concentration parameter have extremely low variance. We can conclude that, for most practical purposes, employing an actual contribution calculation strongly surpasses an equal payout. These results demonstrate the validity of the implementation, and of round-wise Shapley values.

However, we observe performance differences between the aggregation strategies. Although they are not often substantial, being aware that different aggregation methods may not yield the exact same performance in contribution estimation is important when establishing a federation. In particular, we note that no single aggregation strategy clearly outperforms all others. In fact, the best-performing strategy seems to change randomly between experiments. Although \textsc{FedYogi} surpasses the other strategies in eight out of 36 scenarios, this is not enough to draw the conclusion that it is a clearly superior aggregation method when it comes to per-round Shapley evaluation. The only conclusion which can be drawn is that \textsc{Krum} is severely underperforming.

We conclude that all aggregation strategies result in satisfactory contribution evaluations. We suggest that, when proposing new contribution methods for Federated Learning, our results be taken into account by including multiple different aggregation techniques. We have shown that new aggregation mechanisms do not necessarily perform well with any contribution method, as is the case with \textsc{Krum} in our results. This important observation shows that an out-of-the-box strategy, in combination with a contribution mechanism, may lead to poor results even though both the strategy and the contribution method perform adequately on their own.

Overall, the results in Table \ref{table:means} suggest that using per-round Shapley values for contribution evaluation allows the central server to accurately approximate each client's contribution to the shared model and thus reward them fairly. This appears true, as all strategies result in contributions that lie relatively close to the ground truth. 

However, Table \ref{table:abss} shows that a minimal distance from the ground truth does not necessarily mean that all clients are paid fairly. Indeed, we observe that, on average, the clients' rewards are in disagreement by more than $10\%$ in highly heterogeneous settings for even the best-performing strategies. We conclude that solely minimizing the Euclidean distance does not provide good information with regard to the performance of a contribution mechanism, as an overall decent result does not imply that each client is fairly compensated. 

\begin{table*}[ht]
    \begin{center}
    \begin{tabular}{ll|c|c|c| |c|c|c| |c|c|c| |c|c|c}
        \cline{3-14}
        & & \multicolumn{3}{c||}{\textsc{Cifar-10}} & \multicolumn{3}{c||}{\textsc{Cifar-100}} & \multicolumn{3}{c||}{\textsc{Mnist}} & \multicolumn{3}{c}{\textsc{FMnist}} \\
        \hline
        \multicolumn{1}{l||}{$e$} & {\diagbox[height=2\line]{Strategy}{$\alpha$}} & 1 & 10 & 100  & 1 & 10 & 100 & 1 & 10 & 100 & 1 & 10 & 100 \\

        \hline
        \hline
        \multicolumn{1}{l||}{} & \multicolumn{1}{l|}{FedAvg}   & 12.63 & 6.13 & \textbf{5.99} & \textbf{5.31} & 5.89 & 6.88 & 12.62 & 7.65 & 7.5 & 13.06 & 7.13 & 5.71\\
        \multicolumn{1}{l||}{} & \multicolumn{1}{l|}{FedAvgM}   & 12.55 & 6.59 & 7.25 & 5.64 & \textbf{5.65} & 6.9 & 12.82 & \textbf{7.31} & 7.73 & 12.97 & 6.68 & 5.48\\
        \multicolumn{1}{l||}{} & \multicolumn{1}{l|}{FedAdagrad}   & 20.72 & 8.61 & 6.44 & 12.32 & 7.29 & \textbf{5.56} & 13.84 & 8.69 & 7.66 & 14.37 & 6.9 & 6.29\\
        \multicolumn{1}{l||}{2} & \multicolumn{1}{l|}{FedAdam}   & 12.7 & 10.1 & 7.14 & 7.26 & 7.21 & 7.09 & \textbf{10.2} & 9.93 & 9.31 & 13.32 & 8.26 & 7.23\\
        \multicolumn{1}{l||}{} & \multicolumn{1}{l|}{FedYogi}   & \textbf{10.39} & 8.21 & 6.49 & 7.22 & 9.17 & 8.18 & 15.22 & 8.15 & \textbf{4.67} & 16.53 & 8.09 & 5.57\\
        \multicolumn{1}{l||}{} & \multicolumn{1}{l|}{FedMedian}   & 12.78 & 6.18 & 6.73 & 5.76 & 6.45 & 7.35 & 11.43 & 7.86 & 7.07 & \textbf{12.71} & \textbf{6.62} & \textbf{5.26}\\
        \multicolumn{1}{l||}{} & \multicolumn{1}{l|}{FedTrimAvg}   & 12.6 & \textbf{6.03} & 6.5 & 5.91 & 6.21 & 6.05 & 12.39 & 7.33 & 7.46 & 13.04 & 6.82 & 5.43\\
        \multicolumn{1}{l||}{} & \multicolumn{1}{l|}{Krum}   & 35.77 & 19.28 & 10.24 & 51.03 & 18.99 & 11.68 & 16.23 & 9.71 & 7.41 & 17.73 & 10.35 & 7.07\\
        \hline
        \multicolumn{14}{c}{} \\
        \hline
        \multicolumn{1}{l||}{} & \multicolumn{1}{l|}{FedAvg}   & 13.1 & 6.49 & 7.38 & 6.09 & \textbf{5.69} & 5.8 & 13.17 & 8.69 & 9.44 & 12.73 & 6.99 & \textbf{6.24}\\
        \multicolumn{1}{l||}{} & \multicolumn{1}{l|}{FedAvgM}   & 12.85 & 6.37 & 7.09 & 5.84 & 5.84 & 6.79 & 12.74 & 8.7 & 8.56 & \textbf{12.31} & \textbf{6.3} & 7.11\\
        \multicolumn{1}{l||}{} & \multicolumn{1}{l|}{FedAdagrad}   & 19.54 & 8.58 & 6.82 & 12.98 & 6.55 & 7.18 & 14.52 & 9.43 & 7.83 & 13.64 & 6.33 & 7.32\\
        \multicolumn{1}{l||}{5} & \multicolumn{1}{l|}{FedAdam}   & 14.27 & 13.52 & 11.65 & 9.55 & 9.82 & 11.27 & 13.96 & 12.99 & 13.37 & 13.6 & 10.48 & 9.05\\
        \multicolumn{1}{l||}{} & \multicolumn{1}{l|}{FedYogi}   & \textbf{9.7} & 6.83 & \textbf{6.2} & \textbf{5.58} & 6.94 & \textbf{4.69} & 12.49 & \textbf{6.04} & \textbf{5.18} & 13.71 & 7.27 & 5.64\\
        \multicolumn{1}{l||}{} & \multicolumn{1}{l|}{FedMedian}   & 12.47 & \textbf{5.91} & 6.88 & 6.35 & 5.83 & 6.82 & \textbf{12.1} & 7.67 & 9.18 & \textbf{12.31} & 6.67 & 7.23\\
        \multicolumn{1}{l||}{} & \multicolumn{1}{l|}{FedTrimAvg}   & 13.32 & 6.56 & 6.71 & 6.32 & 5.95 & 6.58 & 12.97 & 8.68 & 9.36 & 12.37 & 6.5 & 6.78\\
        \multicolumn{1}{l||}{} & \multicolumn{1}{l|}{Krum}   & 33.57 & 18.88 & 10.83 & 38.61 & 15.48 & 9.04 & 16.8 & 10.82 & 10.87 & 17.4 & 10.76 & 8.37\\
        \hline
        \multicolumn{14}{c}{} \\
        \hline
        \multicolumn{1}{l||}{} & \multicolumn{1}{l|}{FedAvg}   & 13.91 & \textbf{6.29} & \textbf{7.48} & 7.29 & \textbf{5.18} & 7.13 & 13.49 & 10.47 & 10.39 & 12.31 & 7.21 & 8.01\\
        \multicolumn{1}{l||}{} & \multicolumn{1}{l|}{FedAvgM}   & 13.32 & 6.44 & 7.82 & 7.61 & 5.63 & \textbf{6.28} & 13.0 & 9.31 & 10.62 & 12.48 & 7.05 & 8.93\\
        \multicolumn{1}{l||}{} & \multicolumn{1}{l|}{FedAdagrad}   & 19.28 & 8.15 & 8.22 & 13.31 & 8.32 & 6.84 & 12.67 & \textbf{8.1} & \textbf{7.26} & 12.48 & 8.01 & 8.31\\
        \multicolumn{1}{l||}{10} & \multicolumn{1}{l|}{FedAdam}   & 14.77 & 12.65 & 8.66 & 11.65 & 7.39 & 8.21 & 14.86 & 11.35 & 11.26 & 13.86 & 12.45 & 11.34\\
        \multicolumn{1}{l||}{} & \multicolumn{1}{l|}{FedYogi}   & \textbf{10.85} & 8.64 & 8.09 & 7.74 & 6.54 & 8.38 & 15.41 & 12.58 & 9.18 & 12.26 & 9.63 & 8.99\\
        \multicolumn{1}{l||}{} & \multicolumn{1}{l|}{FedMedian}   & 12.44 & 7.02 & 9.64 & 6.77 & 6.56 & 8.25 & \textbf{12.36} & 9.8 & 10.6 & 12.42 & 8.65 & 8.28\\
        \multicolumn{1}{l||}{} & \multicolumn{1}{l|}{FedTrimAvg}   & 13.31 & 6.8 & 7.91 & \textbf{6.45} & 5.52 & 6.54 & 12.81 & 9.54 & 10.39 & \textbf{12.11} & \textbf{6.77} & \textbf{7.5}\\
        \multicolumn{1}{l||}{} & \multicolumn{1}{l|}{Krum}   & 29.6 & 18.43 & 13.73 & 30.62 & 15.27 & 10.45 & 16.98 & 13.64 & 12.93 & 17.2 & 12.57 & 10.61\\
        \hline
        \multicolumn{14}{c}{} \\

       \end{tabular}
        \caption{Average Chebychev, or $L_\infty$, distance to the size-based ground truth, lower values are better. The entries represent the mean worst client percentage difference from the ground truth; \textbf{Bold} entries are the best in the column.}
       \label{table:abss}
    \end{center}
\end{table*}

\section{Contribution instabilities between aggregation strategies}\label{instability}

Robustness, fairness, and generalization as parts of trustworthiness have become an important aspect of Artificial Intelligence \cite{li2022trustworthy}. This is no different in Federated Learning scenarios, where fairness of the global model and trust between participants are crucial to the operation of the federation. 

\begin{figure}[ht]
   \centering
    \includegraphics[scale = .45]{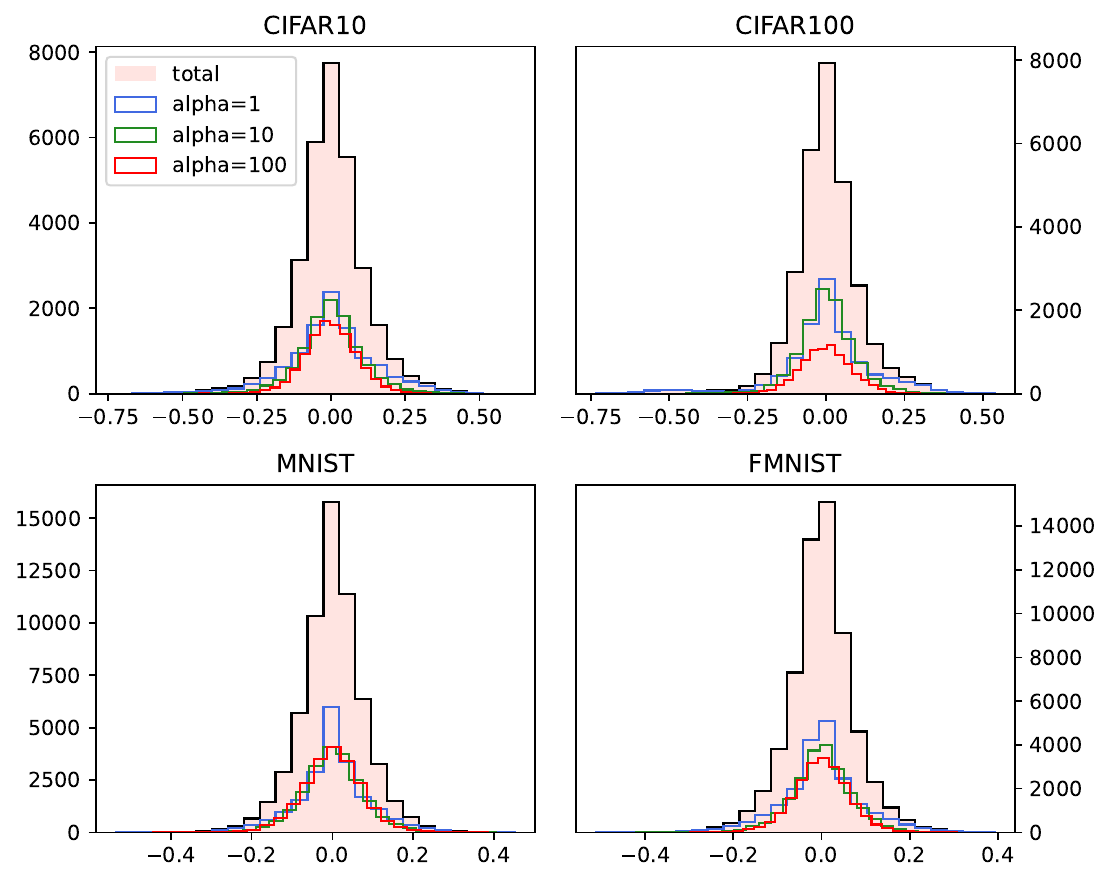}
   \caption{The distributions of client-wise contribution differences between strategies across all experiments. A value of 0.1 represents a client receiving 10\% more of the total reward using one strategy instead of another.}
   \label{fig:agg_diffs}
\end{figure}

In light of this, the previous findings introduce the question of the stability of reward payouts, in general. We have shown that similarity to a ground truth does not imply that client-wise performance is adequate. This leads to the question of how stable the contributions are when there is no ground truth but only different strategies which share an objective function. Indeed, there is no guarantee that the Shapley values display similar distributions across different aggregation strategies, even if they perform equally well with respect to a common ground truth. Assuming a shared environment, two strategies should produce the same contribution values: If neither the data distribution, nor the model or the objective function change, but only the server-side aggregation of models, there is no reason for contribution to behave any differently. This is essential in an industrial setting: If clients know that diverse strategies produce the same global model, but contributions differ, agreeing on the choice of an aggregation strategy seems impossible. Therefore, we analyze and discuss the stability of contributions, not with respect to the ground truth, but among aggregation strategies.

Our experiments show that there are severe discrepancies in contribution allocation between strategies. Figure \ref{fig:agg_diffs} shows the distribution of the differences in the client-specific contribution in all experiments. We observe that clients may receive rewards that differ greatly depending on the chosen aggregation strategy. Keeping in mind that the upper row contains the results of \textsc{Cifar-10} and \textsc{Cifar-100}, both using 3 clients, some examples contain discrepancies that reach $50 \%$ of the total reward. Naturally, the average difference is zero: One client's loss is another client's gain.

In addition, the figure contains $\alpha$-specific histograms. We observe that there is no value that performs significantly better than the others; in fact, the error distribution is stable between different values. This illustrates that no matter how heterogeneous the data, the problem persists. This is undesirable in a nonmalicious federation: A client receiving almost half of the total reward more depending on the aggregation technique is not in the interest of anyone involved.

\begin{figure}[ht]
   \centering
    \includegraphics[scale = .4]{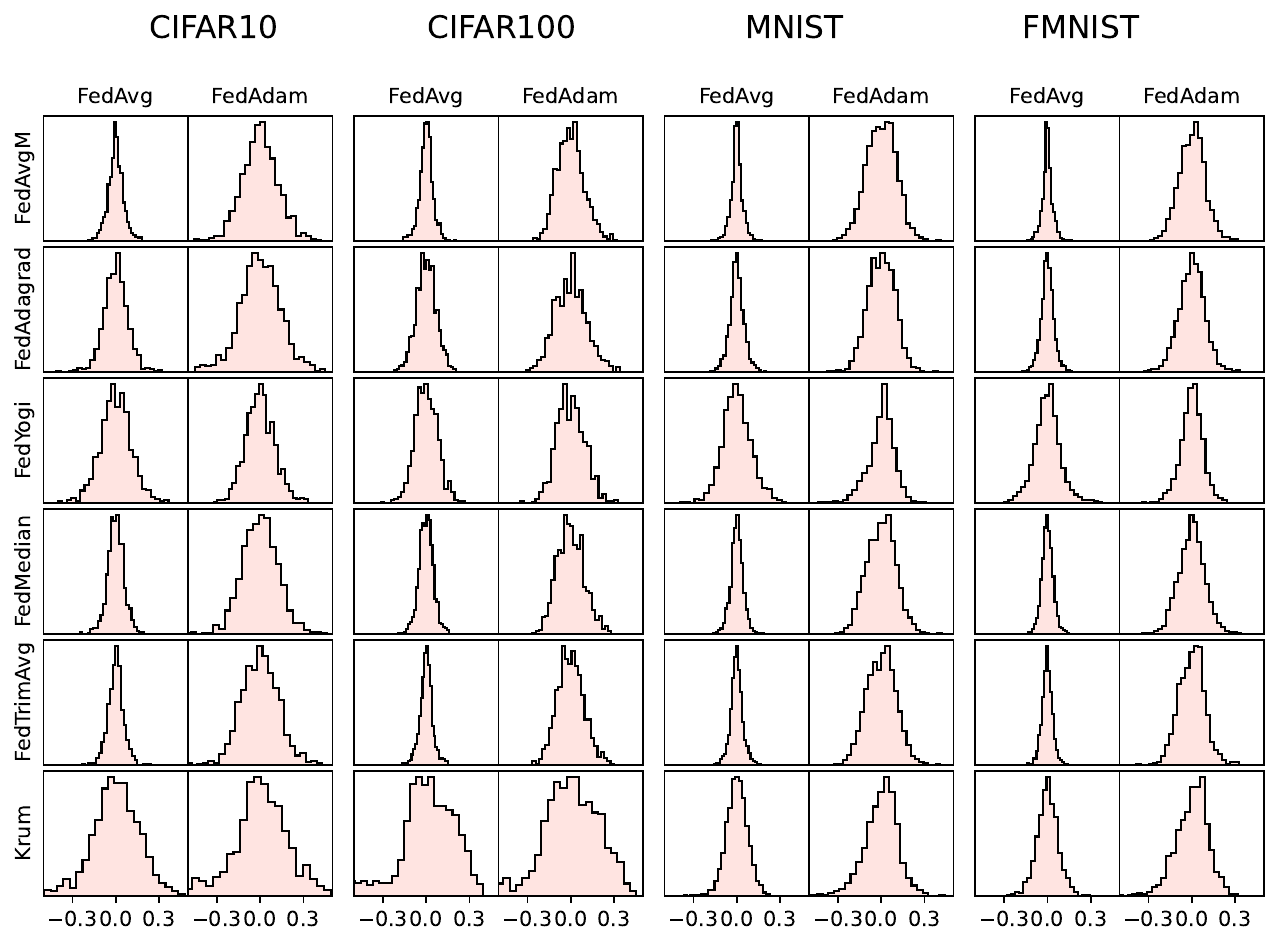}
   \caption{Histograms show contribution differences between pairs of strategies. X-axes are fixed across all plots.}
   \label{fig:agg_diffs_full}
\end{figure}

The scale of the problem is demonstrated in Figure \ref{fig:agg_diffs_full}, which shows the instability between different strategies. Ideally, all histograms would show no variance, with all differences close to zero. We show that this is not the case: Although the variance between strategies changes, there always exist substantially different contribution allocations. Although there are clearly some strategies that agree with others in most contributions, especially \textsc{FedAvg}, we show that there are still discrepancies of up to $30\%$ of the total reward when compared to other aggregations. In summary, we have shown that, even if the overall contribution lies close to the ground truth, different aggregation strategies will lead to differing contributions per client. As described above, this can cause substantial problems when establishing federations.

\section{Discussion}
To the best of our knowledge, this work is the first to conduct a large-scale empirical analysis with findings that highlight a critical issue in federated learning: the inherent instability in the estimation of the contribution based on the per-round Shapley value across different aggregation strategies. This instability raises questions about the viability of current methods for determining client contributions in real-world federated learning environments, particularly in a cross-silo setting, where economic incentives are more common. One of the main implications of these results is the potential erosion of trust in the contribution evaluation methods and therefore in the federation at large. Without assurances in place, a participant whose contribution turns out to be far less than expected could argue that the chosen strategy disadvantages them and leaves the federation. This could hurt all participants, as the leaver might have possessed unique data from which the global model, and thus everyone involved, would profit. Trust is a cornerstone of federated learning, and its erosion could significantly impede its adoption in commercial applications.

Hence, we argue that although most aggregation strategies achieve comparable performance in terms of model accuracy, contribution evaluation using round-based Shapley values in Federated Learning lacks robustness, generalization, and fairness. By shedding light on this instability of contribution evaluation between some of the most popular strategies, we hope to provide a foundation for future aggregation strategies to self-evaluate not only their performance under traditional metrics but also the contribution evaluation stability.

Therefore, when configuring federation parameters in a real-world setting, it is crucial to ensure that any transition to an alternative aggregation strategy is accompanied by a comprehensive stability study such as the one presented in this work.  Through such a study, federations can ensure that the new aggregation strategy not only performs well in terms of model accuracy but also provides contribution stability, fostering a sustainable and trustworthy continuation of the federation.

In a more general context, we advocate for a reorientation in the design and evaluation of federated learning contribution systems. Beyond traditional metrics like accuracy, future research and development efforts should include the fairness and robustness of contributions in their studies to foster a sustainable and trustworthy federated learning ecosystem.

\section{Conclusions \& Future Work}\label{sec:conclusions}

This work has brought to light a concern regarding the stability of popular Federated Learning strategies when evaluating the contribution of participants. Our findings underscore the challenges and risks associated with deploying cross-silo federated learning platforms in industrial settings, given the discrepancy of contribution amongst these strategies. Future proposed FL strategies should take our findings into account, conducting a performative analysis not only on the similarity of the strategies' contributions compared to a ground truth, but also on their stability compared to other strategies.

In a deployed cross-silo federation, a major task is to reduce friction between participants: Unlike what was observed, where a specific aggregation strategy might have been more beneficial to a client, statistically relevant contribution values may be sampled ahead. This would leaves the option to choose the strategy which is most beneficial to the federation as a whole, eliminating inter-client concurrence. This would lead to participants being rewarded more fairly, greatly improving trust in the remuneration process.

A potential venue of research would be to design an aggregation strategy based on an ensemble of different aggregation strategies designed to reduce the variance of the contribution allocation. This could mitigate the observed instability and maintain acceptable performance.

Finally, even if mitigation strategies are implemented for the observed instability, the computational complexity of Shapley value-based methods presents another hurdle in scaling these approaches to federations with a large number of participants. The computational effort of Shapley values scales exponentially with the number of clients, rendering simulations in high-client environments highly demanding. As federated learning gains traction in industrial and societal applications, such as healthcare and finance, the need for scalable, computationally efficient contribution evaluation techniques becomes paramount. Methods like \textsc{FastShap} \cite{jethani2022fastshap} or Monte Carlo sampling offer promising avenues for reducing the computational burden, but their impact on stability and fairness requires thorough investigation.

\bibliographystyle{IEEEtran}
\bibliography{main}

\end{document}